\newtheorem{theorem}{Theorem}
\newtheorem{lemma}{Lemma}
\newcommand{\eg}{{\em e.g.}}
\newcommand\threecolor{\textsc{3-Color}}
\newcounter{int}
\newcommand{\citen}[1] {\setcounter{int}{0}\@for\tmp:=#1\do{%
\ifnum \value{int}>0; \fi%
\setcounter{int}{1}%
\citeauthor{\tmp} \shortcite{\tmp}}}
\title{An Algorithm for Multi-Attribute Diverse Matching}
\author{
 Saba Ahmadi$^1$,
 Faez Ahmed$^2$,
 John P.\ Dickerson$^1$,
 Mark Fuge$^3$,
 Samir Khuller$^4$
 \\ 
 $^1$ Department of Computer Science, University of Maryland\\
 $^2$ Department of Mechanical Engineering, Northwestern University\\
 $^3$ Department of Mechanical Engineering, University of Maryland\\
 $^4$ Department of Computer Science, Northwestern University\\
 saba@cs.umd.edu,
 faez@northwestern.edu,
 john@cs.umd.edu,
 fuge@umd.edu,
 samir.khuller@northwestern.edu
}
\begin{document}

\maketitle

\begin{abstract}
Bipartite $b$-matching, where agents on one side of a market are matched to one or more agents or items on the other, is a classical model that is used in myriad application areas such as healthcare, advertising, education, and general resource allocation. Traditionally, the primary goal of such models is to maximize a linear function of the constituent matches (e.g., linear social welfare maximization) subject to some constraints. Recent work has studied a new goal of balancing whole-match \emph{diversity} and economic efficiency, where the objective is instead a monotone submodular function over the matching.  Basic versions of this problem are solvable in polynomial time. In this work, we prove that the problem of simultaneously maximizing diversity along several features (e.g., country of citizenship, gender, skills) is NP-hard. To address this problem, we develop the first combinatorial algorithm that constructs provably-optimal diverse $b$-matchings in pseudo-polynomial time. We also provide a Mixed-Integer Quadratic formulation for the same problem and show that our method guarantees optimal solutions and takes less computation time for a reviewer assignment application. 

\end{abstract}

\section{Introduction}
The bipartite matching problem occurs in many applications such as healthcare, advertising, and general resource allocation. Weighted bipartite $b$-matching is a generalization of this problem where each node on one side of the market can be matched to many items from the other side, and where edges may also have associated real-valued weights. Examples of weighted bipartite $b$-matching include assigning children to schools~\cite{drummond2015sat,kurata2017controlled}, reviewers to manuscripts~\cite{charlin2013toronto,Liu:2014:RMP:2645710.2645749}, and donor organs to patients~\cite{Dickerson:2015:FCH:2887007.2887094,Bertsimas19:Balancing}.

\citen{Ahmed:2017:DWB:3171642.3171649} introduced the notion of \textit{diverse} bipartite $b$-matching, where the goal was to simultaneously maximize the ``efficiency'' of an assignment along with its ``diversity.'' For example, a firm might want to hire several highly-skilled workers, but if that firm also cares about diversity it may want to ensure that some of those hires occur across marginalized categories of employees. They proposed an objective which combined economic efficiency and diversity demonstrating that, in practice, reducing the efficiency of a matching by small amounts can often lead to significant gains in diversity across a matching. However, their formulation was limited to diversity for a single feature. It also relied on solving a general Mixed-Integer Quadratic Program (MIQP), which is flexible but computationally intractable.


\begin{figure}[ht!]
    \centering
    \includegraphics[width=0.46\textwidth]{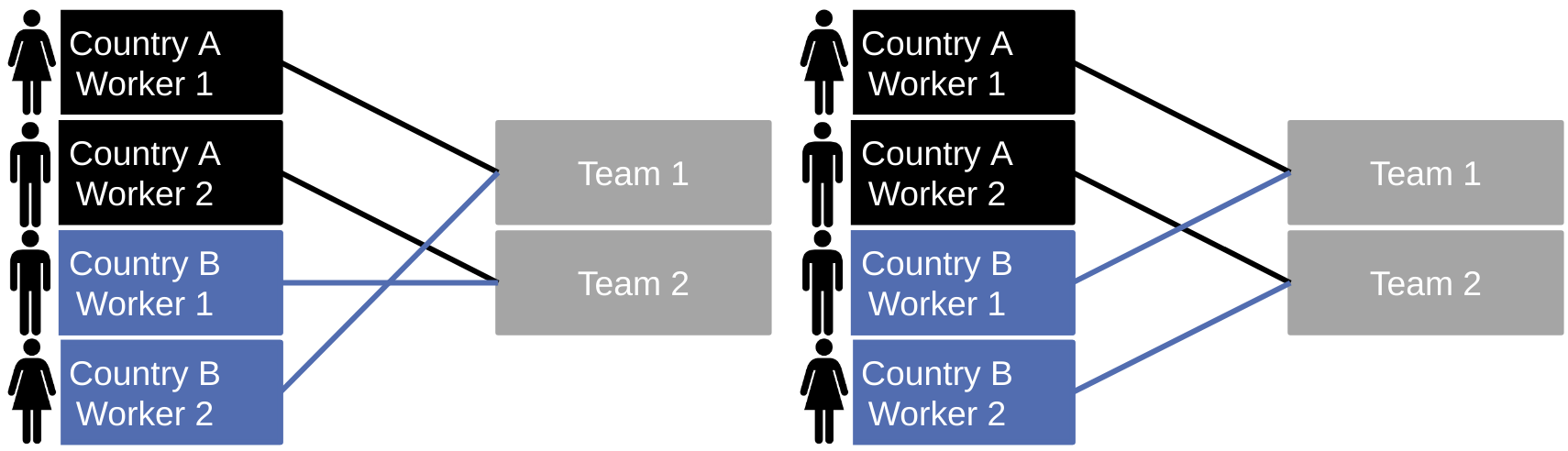}
    \caption{An illustrative example of single feature diverse matching (left) versus multi-feature diverse matching (right); here, the matching creates teams with workers from each country and gender.}
    \label{fig:matrix}
\end{figure}

In this work, we generalize the diverse matching problem and 
introduce matchings where each worker has \emph{multiple} features (e.g., country of origin, gender) and our goal is to form diverse teams with respect to all these features. We found that the problem with a single feature, studied by~\citen{Ahmed:2017:DWB:3171642.3171649}, can be reduced to a minimum quadratic cost maximum flow formulation and solved in polynomial time by an existing algorithm~\cite{Minoux1986}. In contrast, we provide NP-hardness results for the general case of multiple features.

\noindent\textbf{Our contributions.}
The paper's main contributions follow:
\setlist{nolistsep}
\begin{itemize}[noitemsep,leftmargin=*]

    \item We provide the first pseudo-polynomial time algorithm for the diverse bipartite $b$-matching w.r.t.\ multiple features problem with class-specific weights.\footnote{That is, under conditions when the cost of assigning all items from one category to an item on the other side of the graph is the same. This holds when, e.g., one is matching academic papers to reviewers where each reviewer can specify exactly one field of expertise and the cost of assigning a paper to any of the reviewers \emph{within} the same field is the same but differs \textit{across} fields.} The key insight lies in detecting \emph{negative cycles} in an auxiliary graph representation, which we use to either provide incremental improvements to the incumbent diverse matching or prove that our negative-cycle-detection algorithms have found a globally-optimal matching. We also provide a general MIQP formulation for this problem.
        \item We then extend the algorithm to the diverse bipartite $b$-matching problems with \textit{general} edge weights, where edge weights of nodes within a category can be different.
    \item Lastly, we demonstrate our algorithm's applicability to paper-reviewer matching. Our algorithm takes less time to converge to an optimal solution than the proposed MIQP approach (using a state-of-the-art commercial solver). 
    
\end{itemize}
\section{Related Work}
Matching people to form diverse teams leverages the intersection of two past areas of research: the role of team diversity in collaborative work and how diversity among groups of resources is measured and used to form/match teams. Compared to related work, this paper provides a practical, high-performing method to perform diverse $b$-matching that can enable applications like diverse team formation or diverse resource allocation. Below we will use the example of diverse team formation (for example, in project teams within a company) to provide a concrete example to place prior work in context; however, our proposed approach is generally applicable to any diverse matching problem.

In the example of forming teams, the traditional approach is to use weighted bipartite $b$-matching (WBM) methods~\cite{basu2015task}. 
These methods maximize the total weight of the matching while satisfying some constraints. However, there are two major issues with these approaches. First, it assumes that the value provided by a person in a team is always fixed and independent of who else is in the team. This assumption may not hold in many cases. A new team member may provide more added value to the team if she is added to a smaller team compared to the case if she is added to a larger team. This property of diminishing marginal utility can be mathematically captured by a family of functions called submodular functions. Second, existing approaches do not account for diversity \textit{within} a team, where teams with workers from different backgrounds may be desirable. For example, different types of worker diversity have a direct impact on the success rate of tasks~\cite{ross2010crowdworkers}. Likewise, firms with a higher number of employees with higher education and diversity in the types of educations have a higher likelihood of innovating~\cite{ostergaard2011does} and increasing revenue for firms~\cite{hunt2015diversity}. In this paper, we address both these issues.

Past researchers have generally measured diversity by defining some notion of \textit{coverage}\textemdash that is, a diverse set is one that covers the space of available variation. Mathematically, researchers have done so via the use of \textit{submodular functions}, which encode the notion of diminishing returns~\cite{lin2012learning}; that is, as one adds items to a set that are similar to previous items, one gains less utility if the existing items in the set already ``cover'' the characteristics added by that new item. For example, many previous diversity metrics used in the information retrieval or search communities---including Maximum Marginal Relevance (MMR)~\cite{carbonell1998use} and Determinantal Point Processes~\cite{kulesza2012determinantal}---are instances of submodular functions. These functions can model notions of coverage, representation, and diversity~\cite{ahmed2018ranking} and they have been shown to achieve top results on common automatic document summarization benchmarks\textemdash \eg, at the Document Understanding Conference~\cite{lin2012learning}. 

Within matching, our work is closest to that of~\citen{Ahmed:2017:DWB:3171642.3171649}, which used a supermodular function to propose a diverse matching optimization method. Other researchers have also approached similar problems, with diversity either as an objective or as a constraint. For instance,~\citen{golz2018migration} match migrants to localities in a way that maximizes the expected number of migrants who find employment. \citen{benabbou2018diversity} study the trade-off between diversity and social welfare for the Singapore housing allocation. They model the problem as an extension of the classic assignment problem, with additional diversity constraints. \citen{lian2018conference} solve the assignment problem when preferences from one side over the other side are given and both sides have capacity constraints. They use order weighted averages to propose a polynomial-time algorithm which leads to high quality and more fair assignments. \citen{Agrawal18:Proportional} show that a simple iterative proportional allocation algorithm can be tuned to produce maximum matching with high entropy.  
Finally,~\citen{Kobren:2019:PML:3292500.3330899} proposed two fairness-promoting algorithms for the paper-reviewer matching problem. They demonstrate that their algorithm achieves higher utility compared to state of the art matching algorithms that optimize for fairness only. 
In contrast, our goal is to develop an algorithm for finding the optimal assignment which maximizes utility as well as diversity along multiple features as an objective---along with having constraints on workload. 

We define a utility function that can be tuned to balance the diversity and total weight of matching. The diversity function is inspired by the Herfindahl index~\cite{hirschman1964paternity}, which is a statistical  measure of concentration and commonly used in economics. We provide a new algorithm that models the problem using an auxiliary graph and uses a heuristic improvement of the negative cycle detection of Bellman-Ford by~\citen{Goldberg93aheuristic}\footnote{We used the negative cycle detection algorithm by~\citen{Goldberg93aheuristic}. \citen{Cherkassky93shortestpaths} compared the performance of multiple negative cycle detection algorithms, and the algorithm by~\citen{Goldberg93aheuristic} was one of the fastest.} to find negative cycles and cancel them on a new graph to obtain an optimal solution for the original problem.
\section{Preliminaries}
\label{sec:prelim}
In this section, we first define the preliminaries for a diverse matching problem, where workers are to be matched to teams and each team wants workers belonging to a diverse set of features.
In our problem, we are given a set of features for the workers. Let $\mathcal{F} = \{f_1,\cdots, f_{|\mathcal{F}|}\}$ denote the feature set for the workers. An example of a feature set could be \{country of citizenship, gender\}. Each feature $f_k\in \mathcal{F}$ has one of the values $\mathcal{F}_k = \{f_{k,1},\cdots,f_{k, |\mathcal{F}_k|}\}$. Let $|\mathcal{F}_{k,k'}|$ denote the number of workers having value $f_{k,k'}$ for feature $f_k$.
The set of workers is denoted by $X=\{x_1,\ldots,x_n\}$. We wish to form a set of teams $\{T_1,\ldots,T_t\}$ of the workers where each team $T_i$ has a demand of $d_i$, specifying the number of workers that need to be assigned to it. Each worker can be assigned to exactly one team. 

The diversity of an assignment is denoted by $D$ and is equal to $\sum_{k=1}^{|\mathcal{F}|} \lambda_k D_k$, where $D_k$ shows the diversity w.r.t.\ feature $f_k$, and $\lambda_k\in \mathcal{Z}^+$ is a constant. Let $c_{i,k,k'}$ denote the number of workers in $T_i$ having value $f_{k,k'}\in \mathcal{F}_k$ for feature $f_k$. Then, $D_k = \sum_{i=1}^t\sum_{k'=1}^{|\mathcal{F}_k|}c_{i,k,k'}^2$.
To facilitate explanation, we assume throughout this paper that the country of origin is the $1$st feature, therefore, the number of workers assigned to team $T_i$ from $j$-th country is denoted by $c_{i,1,j}$. The cost of assigning each worker from $j$-th country to team $T_i$ is denoted by $u_{i,j}\in \mathcal{Z}^+$. We assume all costs are integers. The total cost of an assignment is $TU = \sum_{i=1}^t \sum_{j=1}^{|\mathcal{F}_1|} u_{i,j}\cdot c_{i,1,j}$.

Our goal is to minimize the objective function which is equal to $\lambda \cdot D + \lambda_0 \cdot TU$, where $\lambda\in \mathcal{Z^+}^{|\mathcal{F}|}$, and $\lambda_0 \in \mathcal{Z^+}$ is a constant. Next, we provide Theorem \ref{thm:NP-completeness-multiple-features}, which shows that this problem is NP-hard.
\begin{theorem}
\label{thm:NP-completeness-multiple-features}
Minimizing the supermodular diversity function w.r.t multiple features is NP-hard.
\end{theorem}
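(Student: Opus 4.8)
The plan is to reduce from \threecolor, the NP-complete problem of deciding whether the vertices of a graph $G=(V,E)$ can be properly colored with three colors. I would identify the three colors with three teams $T_1,T_2,T_3$ and create one worker per vertex, so that any assignment of workers to teams is exactly a (not necessarily proper) $3$-coloring of $G$. The crux is to design the feature set so that the supermodular diversity objective counts precisely the monochromatic edges: then minimizing diversity is equivalent to minimizing color conflicts, and a conflict-free assignment exists iff $G$ is $3$-colorable.

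To encode the edges, I would introduce one feature $f_e$ per edge $e=(u,v)$. For this feature I assign $u$ and $v$ a common (``shared'') value and give every other worker its own private value, so that the only feature value that more than one worker can share is the one tied to the endpoints of $e$. Because $D_k=\sum_i\sum_{k'} c_{i,k,k'}^2$, each private value contributes a fixed $1$ to the objective regardless of the assignment, while the shared value contributes $4$ when $u$ and $v$ land on the same team and $2$ when they are split. Summing over all edge-features, the diversity equals a constant plus $2$ times the number of monochromatic edges, so the objective is minimized exactly by the assignments minimizing color conflicts.

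Two technical points need care. First, the model imposes exact team demands $d_i$, whereas a proper $3$-coloring need not be balanced; I would handle this by padding with dummy workers (enough that each team can be filled for any coloring) and giving each dummy a private value in every feature, so dummies add only a constant to the objective and never interact with the shared edge values. Second, the objective also contains the cost term $\lambda_0\cdot TU$; setting uniform (or zero) costs and a unique-per-worker first feature makes $TU$ constant across feasible assignments, so it does not affect the comparison. With these in place I would set the threshold to the value attained when there are zero monochromatic edges and argue both directions: a proper coloring yields an assignment meeting the threshold, and any assignment meeting it must be conflict-free, hence a proper coloring.

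The main obstacle is the bookkeeping that isolates the monochromatic-edge count as the sole non-constant part of the objective: I must verify that, across all feasible assignments respecting the demands, every private value and every dummy contributes an identical constant, so that the chosen threshold separates yes- from no-instances exactly. Once that accounting is pinned down, polynomiality of the construction together with NP-completeness of \threecolor yields the result.
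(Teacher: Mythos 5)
Your reduction is essentially the paper's own proof: one feature per edge with a shared value for the two endpoints and private values for everyone else, so that the quadratic diversity term counts monochromatic edges, with zero costs and a threshold separating proper from improper colorings. The only difference is bookkeeping around the demands --- the paper reduces from the variant of \threecolor{} with prescribed color-class sizes $n_1,n_2,n_3$ so the demands match directly, whereas you pad with all-private dummy workers to absorb the imbalance of an arbitrary proper coloring; both are sound.
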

\begin{proof}
We show a reduction from the \threecolor{} problem which is as follows: given a graph $G=(V,E)$ with $n$ vertices, does there exist a coloring with $n_1$ vertices of color $c_1$, $n_2$ vertices of color $c_2$, and $n_3$ vertices of color $c_3$, such that no two adjacent vertices receive the same color, and all the vertices are colored?

The reduction is as following: In \threecolor{}, assign a feature $f_k$ to each edge $e_k = (v_{k_1},v_{k_2})\in E$, and a worker to each vertex. Let $f_{k,i}$ denote the value of $f_k$ for the worker corresponding to $v_i\in V$. Let $f_{k,i} = i$ if $i\neq k_1, k_2$. Otherwise, let $f_{k,i}=0$.
The goal is to form three teams $T_1, T_2, T_3$ with demands $d_1 = n_1, d_2 = n_2, d_3 = n_3$, respectively. We assume that all the costs of assigning workers to the teams are zero, therefore the objective function is to minimize the total diversity. Consider an arbitrary edge $e_k = (v_{k_1}, v_{k_2})$. If the endpoints of $e_k$ belong to different teams, $f_k$ contributes $n_1+n_2+n_3$ to the objective function since all the workers inside a team have different values for $f_k$. Otherwise, it contributes $n_1+n_2+n_3-2+2^2$ since workers corresponding to $v_{k_1}, v_{k_2}$ are the only workers having the same value for $f_k$ inside a team.
If the cost of the optimal solution for the diverse matching problem is $(n_1+n_2+n_3)\cdot|E|$, there does not exist a pair of workers in a team where the vertices corresponding to them are neighbouring in $G$. Otherwise if the cost of the optimal solution is more than $(n_1+n_2+n_3)\cdot|E|$, the \threecolor{} instance is infeasible.
\end{proof}





We are interested in solving this NP-hard problem. We begin by presenting two different representations of instances of the problem: one in matrix form (used for expositional ease), and the other in graph form (used to build our optimal diverse matching algorithm in Section~\ref{algorithm}). 

\textbf{Matrix Representation:}


An example of matrix representation with three teams and two countries and two genders is shown in Fig.~\ref{fig:matrix-exchange}. In this representation, each column $V_j$ corresponds to a feature set $v_j = \{v_{j,1}, \cdots, v_{j,|\mathcal{F}|}\}$, where $\forall 1\leq k\leq|\mathcal{F}|, v_{j,k} \in \mathcal{F}_{k}$.
Each row corresponds to a team. Entry $w_{i,j}$ shows the number of workers with feature set $v_j$ assigned to $T_i$. We introduce a \emph{dummy team} $T_0$, and $w_{0,j}$ shows the number of workers with feature set $v_j$ who are not assigned to any team. 

\textbf{Matching Representation:}
In this representation, a bipartite graph $G=(\mathcal{X}\cup \mathcal{T}, E)$ is given. The nodes in $\mathcal{X}$ correspond to the workers, and are partitioned into $|\mathcal{V}|$ subsets $V_1,\cdots, V_{|\mathcal{V}|}$, where each subset corresponds to the feature set for a column in the matrix representation. Each vertex in $\mathcal{T}$ corresponds to one team. The assignment of workers to teams forms a $b$-matching, where the degree of node $T_i\in \mathcal{T}$ is $d_i$, and the degree of node $x\in \mathcal{X}$ is at most one.

\textbf{Local Exchange:} 
A local exchange happens when a group of teams decides to transfer one or more workers between each other while maintaining the total number of workers in each of them.
The exchange is done in a way that the initial demands of all the teams are fulfilled. 
Arrows in Fig.~\ref{fig:matrix-exchange} show a local exchange in a matrix representation. 

In this exchange, one worker from $V_2$ is moved from $T_3$ to $T_1$. Two workers from $V_1$ are moved. One is moved from $T_1$ to $T_2$, and the other one is moved from $T_2$ to $T_3$. The set of edges of local exchange in a matrix representation is called a cycle. The source-transitions of a cycle are the cells without any input edges, and the sink-transitions are the cells without any output edges. In Fig.~\ref{fig:matrix-exchange}, the nodes corresponding to $w_{3,2}$ and $w_{1,1}$ are source-transitions nodes, and the nodes corresponding to $w_{1,2}$ and $w_{3,1}$ are sink-transition nodes.

Figure~\ref{fig:matching-exchange} shows the same local exchange operation using a matching representation. In this figure, the black matching shows the initial assignment, and the dotted red matching shows the assignment after the exchange operation is done.

\begin{figure}
    \centering
    \includegraphics[width=0.30\textwidth]{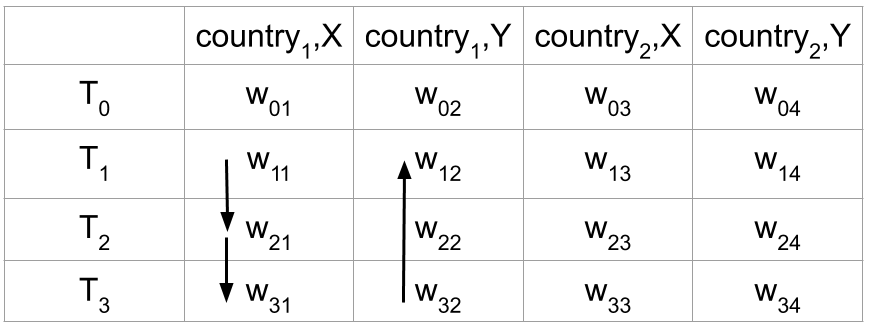}
    \caption{Matrix representation of three teams and workers from two countries and two genders. Dummy team $T_0$ accommodates unassigned workers. Arrows represent a local exchange.}
    \label{fig:matrix-exchange}
\end{figure}

\begin{figure}
    \centering
        \includegraphics[width=0.3\textwidth]{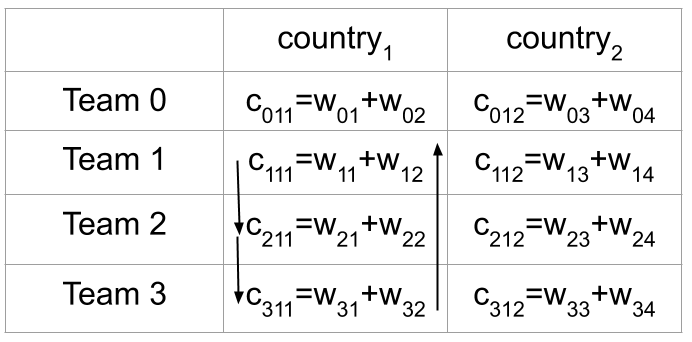}
    \caption{Matrix representation embedding w.r.t country.}
    \label{fig:embedding-country}
\end{figure}

\begin{figure}
    \centering
        \includegraphics[width=0.3\textwidth]{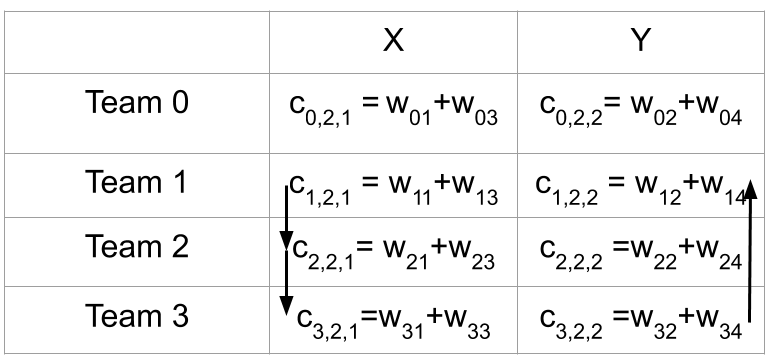}
    \caption{Matrix Representation Embedding w.r.t to gender.}
    \label{fig:embedding-gender}
\end{figure}

\begin{figure}
    \centering
        \includegraphics[width=0.37\textwidth]{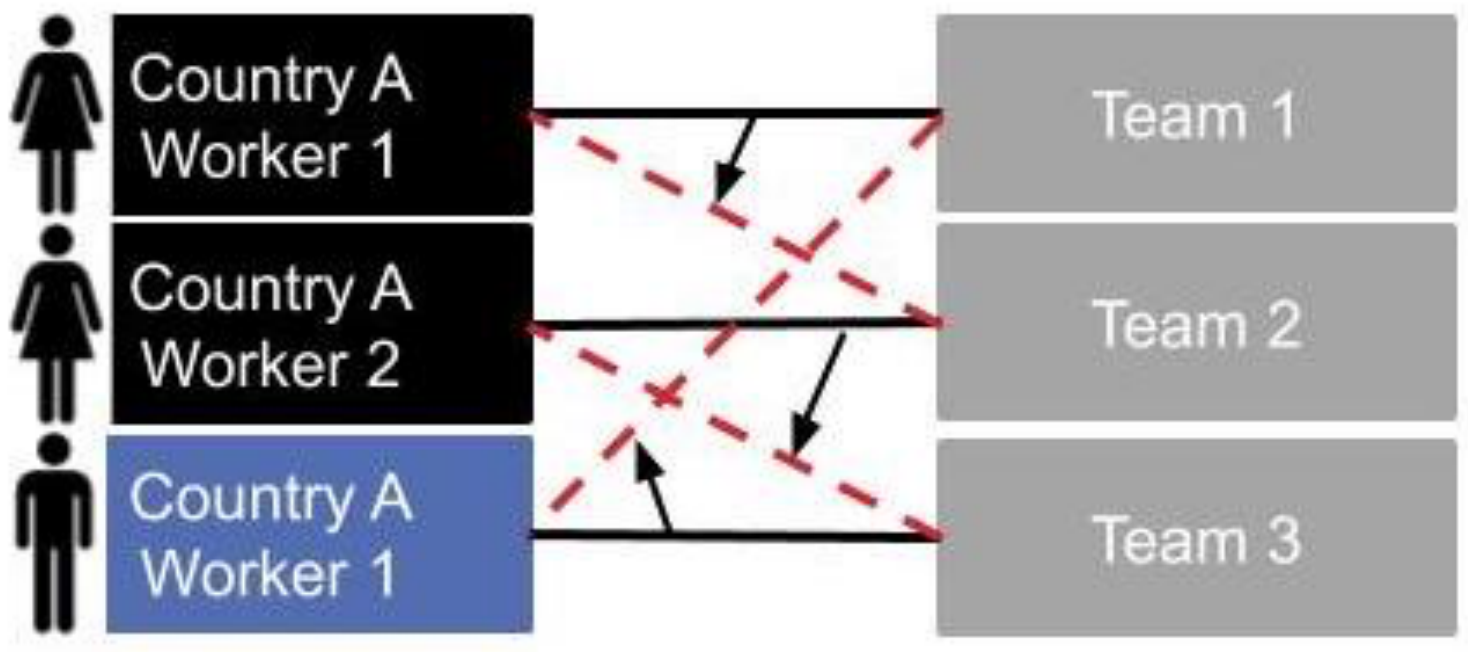}
    \caption{Local exchange operation (in matching representation).}
    \label{fig:matching-exchange}
\end{figure}

\textbf{Gain of a local exchange: } 
Our goal is to minimize the objective function $f$, by doing some local exchanges. To find out, we first calculate the marginal gain from a given exchange operation which is the difference between the objective values before and after a local exchange. 
In order to simplify this concept, we use the following definition:

\textbf{Embedding of Matrix Representation:}
Consider a given matrix representation $M$, it can be embedded into a matrix $M_{k}$ for a fixed feature $f_k$ in the following way: all the columns in $M$ corresponding to the same value $f_{k,k'}$ of $f_k$, are combined into a single column in $M_{k}$. For example, embedding of the matrix representation in Figure~\ref{fig:matrix-exchange} into $M_1, M_2$ w.r.t. the features country and gender are shown in Figures~\ref{fig:embedding-country} and \ref{fig:embedding-gender}.
Since in $M_1$, the number of people assigned from each country to each team is not changed, $\Delta(\lambda_0\cdot TU + \lambda_1 D_1) = 0$. Accoridng to $M_2$, $\Delta(\lambda_2 D_2) = \lambda_2\big((c_{3,2,2}-1)^2-(c_{3,2,2})^2+ (c_{1,2,2}+1)^2-(c_{1,2,2})^2+(c_{1,2,1}-1)^2-c_{1,2,1}^2+ (c_{3,2,1}+1)^2-c_{3,2,1}^2\big)$.

It can be seen that the contribution of the nodes which are not source-transition or sink-transition to the gain of a local exchange is zero (all the nodes in the local exchange in Figure~\ref{fig:embedding-country}, and the node corresponding to $c_{2,2,1}$ in $M_2$). If the net gain, i.e. $\Delta(\lambda_0\cdot TU + \lambda_1 D_1 + \lambda_2 D_2)$, is negative, then the local exchange can be considered beneficial and we can transfer the workers.
\section{Negative-Cycle-Detection-based Algorithms}
\label{algorithm}

In this section, we explain our algorithm for finding the optimum assignment. First, we build an auxiliary graph $G'$. For each team $T_i$, there is a switch in $G'$ with $|\mathcal{V}|$ input ports, and $|\mathcal{V}|$ output ports, where $|\mathcal{V}|$ is the number of columns in the matrix representation. Each port is a node in $G'$, and each switch is a directed bipartite graph, with edges going from its input ports (nodes) to its output ports. In Figure~\ref{fig:graph-exchange}, each box is a switch.
A dummy team $T_0$ is introduced to accommodate all unassigned workers in the matching.
Inside a switch $T_i$, there is a directed edge from each input port to each output port. If the directed edge is connecting two ports such that their corresponding combinations of features do not have the same value for any features, the weight of this edge is equal to zero. Otherwise, per each feature $f_k$ that has the same value, $-2\lambda_k$ is added to the weight of this edge.

The reason behind assigning these weights to the edges is to make sure in a local exchange, considering a fixed feature $f_k$, the nodes which are not a source-transition or a sink-transition w.r.t. $M_{k}$, have zero contribution to $\Delta(D_k)$.

For each pair of teams $T_{i_1}$ and $T_{i_2}$ where ${i_1}\neq {i_2}$, and for each feature combination $v_j$, there is a directed edge from output port $O_{j}^{i_1}$ of switch $T_{i_1}$ to the input port $I_{j}^{i_2}$ of switch $T_{i_2}$, and weight of this edge captures the difference in the objective function when in the matrix representation a person in column $V_j$ (with feature set $v_j$) is moved from $T_{i_1}$ to $T_{i,2}$. 



Each cycle in this graph is corresponding to a cycle in a matrix representation and local exchanges along them have the same gain. Figure~\ref{fig:graph-exchange} shows a cycle which is corresponding to the cycles in Figures~\ref{fig:matrix-exchange} and~\ref{fig:matching-exchange}.


After constructing the auxiliary graph, we run Algorithm~\ref{alg-neg-cycle-detection}. 
Algorithm~\ref{alg-neg-cycle-detection} moves workers from one team to another if it detects a negative cycle. 

\begin{algorithm}
\SetAlgoLined
\SetKwInOut{Input}{Input}
\SetKwInOut{Output}{Output}
\Input{Directed weighted graph $G'$, initial feasible $b$-matching $Q$ which satisfies team demands.}
\Output{Optimal diverse $b$-matching}

 \While{$\exists$ a negative cycle $C\in G'$}{
  // Perform a local exchange operation along $C$\;
  \For{$e\in C$}{
    // Assume edge $e$ is from output port $O_{j}^{i_1}$ of team $T_{i_1}$ to input port $I_{j}^{i_2}$ of another team $T_{i_2}$\;
    // Move one worker with feature set $v_j = \{f_{1,k'_1}, \cdots, f_{|\mathcal{F}|, k'_{|\mathcal{F}|}}\}$ from team $T_{i_1}$ to team $T_{i_2}$:\\
    $\forall k\in \{1,\cdots, |\mathcal{F}|\}$:
    $c_{i_1,k,k'_k}-=1, c_{i_2,k,k'_k}+=1$\;
    Update weight of edges of $G'$ w.r.t to the new values of $c_{i_1,k,k'_k}$, and $c_{i_2,k,k'_k}$\;
  }
 }
 \caption{Find optimal diverse $b$-matching}
  \label{alg-neg-cycle-detection}
\end{algorithm}
Algorithm~\ref{alg-neg-cycle-detection} takes as input an initial feasible solution $Q$ as input. To find $Q$, we first find a feasible solution, which satisfies all the demand constraints. 
In order to find an initial feasible solution, in each iteration, consider the first subset of workers in the the bipartite graph G ($V_j$) with at least one un-assigned worker, and the first team ($T_i$) such that the number of workers assigned to it is less than its demand (In the first iteration, we start with $V_1, T_1$, and all the workers are un-assigned). 
Assign un-assigned workers from $V_j$ to $T_i$, until either demand of $T_i$ is fully satisfied, in this case, move to the next team $(i=i+1)$, or all the workers from $V_j$ are assigned, then let $j=j+1$. Repeat this procedure until all the demand constraints are satisfied. Time complexity of this procedure is $\mathcal{O}(|\mathcal{V}|+t)$.

In Algorithm~\ref{alg-neg-cycle-detection}, any negative cycle detection algorithm can be used to detect negative cycles in $G'$. We use a heuristic improvement of Bellman-Ford proposed by Goldberg and Radzik~\cite{Goldberg93aheuristic} in our experiments.
\begin{figure}
    \centering
    \includegraphics[width=0.35\textwidth, scale = 0.5]{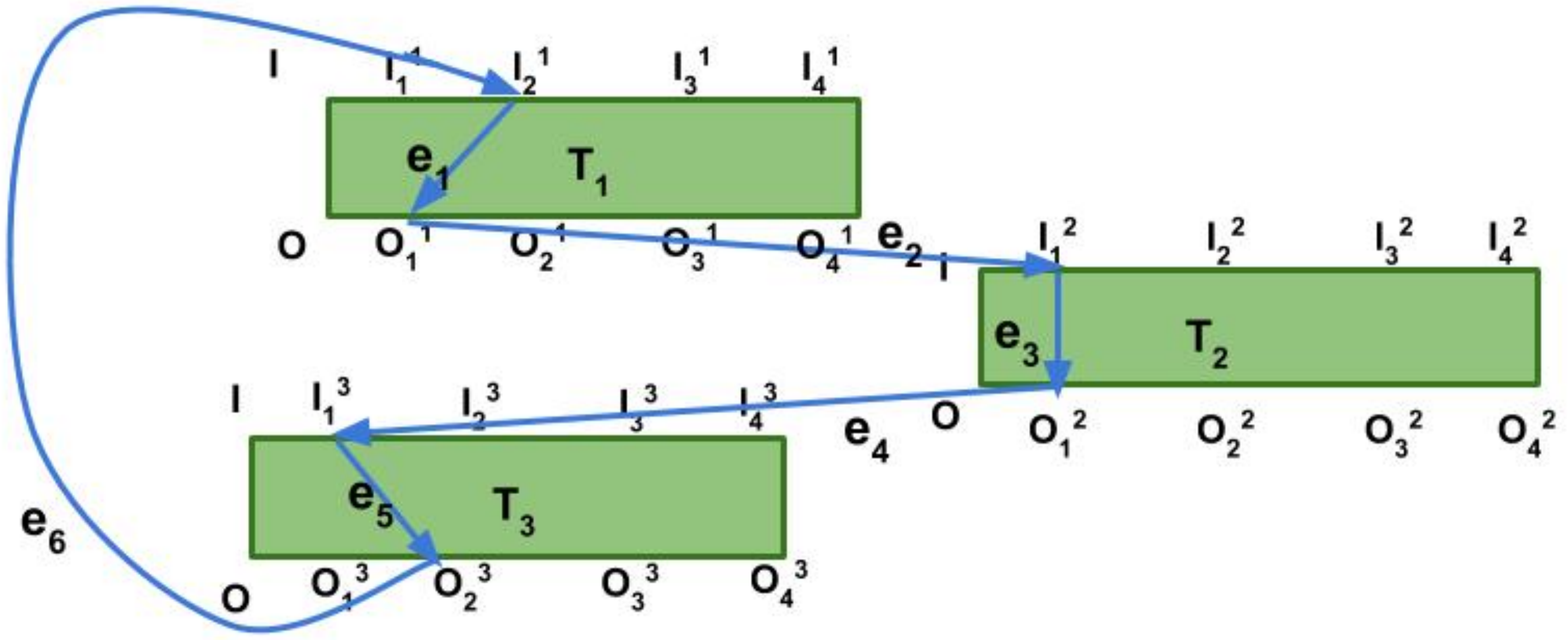}
    \caption{A local Exchange in graph representation. 
    }
    \label{fig:graph-exchange}
\end{figure}
\section{Proof of Optimality}
In this section, 
we prove that Algorithm~\ref{alg-neg-cycle-detection} gives the optimum solution for diverse bipartite $b$-matching problem. 

Assume after the algorithm ends, the final assignment is a local optimum $P$, and the optimum solution is $P^*$. 
Consider the matching representations of $P$ and $P^*$. The symmetric difference of $P$ and $P^*$ ($P\oplus P^*$) can be decomposed into a set of alternating cycles and paths of even length. The reason that the length of alternating paths is even is that size of both of the matchings is equal: $|P| = |P^*|=\sum_{i=1}^t d_i$.

Each local exchange along an alternating cycle corresponds to a cycle in the matrix representation. A local exchange along an alternating path corresponds to a cycle in a matrix representation which includes vertices from row $T_0$. 



Before proving Thm.~\ref{proof-correctness}, we need the following definitions:

\textbf{Maximal Cycle:} A cycle $y$ in a matrix representation $M$ is maximal if its source-transitions (nodes with zero incoming edges) and sink-transitions (nodes with zero outgoing edges) are source-transition and sink-transition w.r.t all the edges in 
$M$ as well. For example, consider Figure~\ref{fig:maximal-cycle}. Let's call the green cycle $y_g$, the red cycle $y_r$, and the blue cycle $y_b$.
The $y_g$ has two source-transitions $w_{1,1}, w_{0,3}$, and it has two sink-transitions $w_{0,1}, w_{1,3}$. Since there are no edges going out of $w_{1,3}, w_{0,1}$, and no edges going into $w_{0,3}, w_{1,1}$, $y_g$ is a maximal cycle. Cycles $y_r, y_b$ are maximal cycles as well. Therefore, $\{y_g\cup y_b\cup y_r\}$ gives a maximal cycle decomposition for $M$. However, if we consider embedding of $M$ w.r.t gender ($M_2$), then $y_r$ is not a maximal cycle anymore, and $\{y_g, y_r \cup y_b\}$ gives a maximal cycle decomposition w.r.t $M_2$ and $M_1$(embedding w.r.t countries).
A cycle is called all-maximal cycle if it is maximal w.r.t all the matrix representations $M_{1}, \cdots, M_{|\mathcal{F}|}$. In this example, $\{y_g, y_r \cup y_b\}$ gives an all-maximal cycle decomposition.

\begin{figure}
    \centering
    \includegraphics[width=0.35\textwidth]{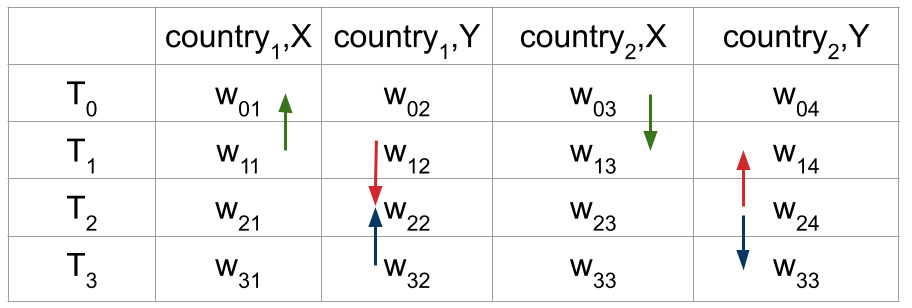}
    \caption{Maximal cycle decomposition}
    \label{fig:maximal-cycle}
\end{figure}
\begin{lemma}
The set of all the edges of $P\oplus P^*$ can be decomposed into a set of all-maximal cycles.
\end{lemma}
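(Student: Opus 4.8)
The plan is to pass from the worker-level object $P\oplus P^*$ to the net flow it induces on the matrix representation $M$, decompose that flow into cycles that are maximal with respect to $M$, and then merge the cycles that interfere under some embedding $M_k$, so that every resulting piece is simultaneously maximal with respect to \emph{all} the embeddings. First I would use the fact that workers sharing a feature set $v_j$ are interchangeable: this lets me replace $P\oplus P^*$ by the integer matrix of net changes $\delta_{i,j}=w_{i,j}^{P^*}-w_{i,j}^{P}$ on the cells $(T_i,V_j)$. Since both matchings satisfy the same team demands and use the same multiset of workers, $\delta$ has zero row sums and zero column sums, i.e.\ it is a balanced circulation on the bipartite graph of teams versus feature sets.

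A standard circulation-decomposition argument then writes $\delta$ as a union of elementary (alternating $\pm 1$) cycles $y_1,\dots,y_m$. Because in the net flow each cell is a pure source ($\delta_{i,j}<0$), a pure sink ($\delta_{i,j}>0$), or empty, every $y_a$ is automatically maximal with respect to $M$: its source- and sink-transitions are genuine sources and sinks of $M$. The only place maximality can still fail is under the column-merging of an embedding, exactly as the cycle $y_r$ of Fig.~\ref{fig:maximal-cycle} illustrates. To fix this I would build an auxiliary graph $H$ whose vertices are $y_1,\dots,y_m$, joining $y_a$ and $y_b$ whenever, for some feature $f_k$, team $T_i$, and value $f_{k,k'}$, one of them moves a worker of that $f_k$-value \emph{into} $T_i$ while the other takes one \emph{out} — that is, in $M_k$ they turn the embedded cell $(T_i,f_{k,k'})$ into a cell with both incoming and outgoing flow. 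I then take the connected components of $H$ and merge the cycles of each component into a single circulation $Y$. Each $Y$ is again balanced (a union of balanced circulations), hence a legitimate cycle in the matrix representation, and the components partition all the edges of $P\oplus P^*$.

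The final step is to verify that every merged component $Y$ is all-maximal. Fix a feature $f_k$ and suppose $Y$ has a source-transition at the embedded cell $(T_i,f_{k,k'})$, i.e.\ $Y$ has outgoing but no incoming flow there in $M_k$. If some other component $Y'$ carried incoming flow at that same cell, then $Y$ and $Y'$ would be adjacent in $H$ and therefore lie in the same component, a contradiction; and $Y$ itself has no incoming flow there by assumption. Hence the cell has no incoming edge anywhere in $M_k$, so the transition is genuine; the symmetric argument handles sink-transitions, and since $k$ was arbitrary this gives all-maximality. The case where a single component has both signs at one embedded cell is harmless: such a cell is balanced and is not a claimed source/sink transition of $Y$, and any opposing flow from another component would again force the two into the same component.

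The main obstacle is precisely this coupling between features: a circulation that is maximal with respect to $M$ can be spoiled by the column-merging of a single embedding, so the crux is to show that closing up under the interference relation both (i) internalizes every offending transit cell and (ii) never reintroduces a spurious source or sink in any embedding. Establishing that these two properties hold \emph{simultaneously} for all $M_1,\dots,M_{|\mathcal{F}|}$ is the delicate point, and it is exactly what the connected-component argument above secures; the remaining facts (balance of a union of circulations, and termination of the merging since $H$ is finite) are routine.
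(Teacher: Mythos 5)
Your proposal is correct and takes essentially the same approach as the paper: start from a cycle decomposition of $P\oplus P^*$ in the matrix representation, discard the net-zero part, and repeatedly merge any cycle with the cycles that destroy its maximality under some embedding $M_k$ until every piece is all-maximal. Your version is a more careful elaboration of the paper's sketch --- using the net-flow matrix $\delta$ to get maximality with respect to $M$ for free, and taking connected components of an explicit interference graph rather than ad hoc pairwise unions --- but the underlying idea and the resulting decomposition are the same.
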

\begin{proof}
Consider an arbitrary decomposition of the edges of $P\oplus P^*$ in the matrix representation into a set of cycles $\{y_1,\cdots,y_{\ell}\}$.
If there exists a cycle in $P\oplus P^*$ without any source-transitions and sink-transitions, it means the gain of this cycle is zero and it could be discarded. If there exists any cycle $y_p$ which is not all-maximal, then there exists another cycle $y_q$ which makes $y_p$ not to be maximal w.r.t some features. For example in Figure~\ref{fig:maximal-cycle}, $y_r$ is not maximal because of $y_b$. In this case, union $y_p$ and $y_q$, and make $y_p\cup y_q$ a single cycle in the decomposition. At the end, all the edges in $P\oplus P^*$ will be decomposed into a set of all-maximal cycles. Let's call the set of all-maximal cycles $\{y'_1,\cdots, y'_{\ell'}\}$.
\end{proof}
\begin{theorem}
\label{proof-correctness}
Algorithm~\ref{alg-neg-cycle-detection} finds the global optimum for the diverse $b$-matching problem.
\end{theorem}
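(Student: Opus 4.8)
The plan is to read Algorithm~\ref{alg-neg-cycle-detection} as a negative-cycle-canceling procedure and argue optimality exactly as one does for min-cost flow, with all the extra effort coming from the fact that the objective $f=\lambda_0\cdot TU+\sum_k\lambda_k D_k$ is quadratic rather than linear. Let $P$ be the $b$-matching at termination and let $P^*$ be a global optimum. Since the algorithm halts only when $G'$ contains no negative cycle, and since $f(P)\ge f(P^*)$ is immediate from optimality of $P^*$, it suffices to prove the reverse: absence of a negative cycle in $G'$ at state $P$ forces $f(P)\le f(P^*)$, i.e.\ $f(P^*)-f(P)\ge 0$.

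First I would invoke the preceding lemma to write the edge set of $P\oplus P^*$ as a disjoint union of all-maximal cycles $y'_1,\dots,y'_{\ell'}$ (the even-length alternating paths become cycles in the matrix representation once the dummy team $T_0$ is included), and recall from Section~\ref{algorithm} that each $y'_j$ corresponds to a directed cycle in $G'$ along which one round of local exchanges moves $P$ toward $P^*$. Two facts then suffice. (i) The weight of the $G'$-cycle attached to $y'_j$, evaluated at the counts $c_{i,k,k'}$ of $P$, equals the true change $g_j$ in $f$ produced by performing exchange $y'_j$ on $P$ in isolation. (ii) The total change splits as $f(P^*)-f(P)=\sum_j g_j+(\text{cross terms})$, where the cross terms are non-negative. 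Given (i) and (ii), no negative cycle means every $g_j\ge 0$, hence $f(P^*)-f(P)\ge 0$, as required.

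For (i), I would verify directly that the weighting scheme reproduces the quadratic gain feature by feature. Moving one worker of type $v_j$ from $T_{i_1}$ to $T_{i_2}$ changes $D_k$ by $2\lambda_k(c_{i_2,k,k'_k}-c_{i_1,k,k'_k}+1)$, which is exactly what the between-switch edge records; the within-switch edges of weight $-2\lambda_k$ then cancel the spurious $+2\lambda_k$ constant that two between-switch edges double-count whenever a worker of a given $f_k$-value both enters and leaves the same team along the cycle (there the $\pm 2c$ terms already cancel, leaving a net change of $0$). Summing edge weights around the $G'$-cycle and matching terms against $\Delta(\lambda_0\cdot TU+\sum_k\lambda_k D_k)$ gives $w_{G'}(y'_j)=g_j$; the embeddings $M_1,\dots,M_{|\mathcal F|}$ make the bookkeeping transparent, since in each $M_k$ only source- and sink-transitions contribute.

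The genuinely delicate step, and the one I expect to be the main obstacle, is the non-negativity of the cross terms in (ii). Writing $\delta^{(j)}_{i,k,k'}\in\{-1,0,+1\}$ for the net change that $y'_j$ makes to cell $(i,k,k')$ in $M_k$ and expanding $\sum_{i,k,k'}\lambda_k\big(c_{i,k,k'}+\sum_j\delta^{(j)}_{i,k,k'}\big)^2$, the difference between the true gain and $\sum_j g_j$ equals $\sum_{i,k,k'}\sum_{p\ne q}\lambda_k\,\delta^{(p)}_{i,k,k'}\delta^{(q)}_{i,k,k'}$. Here the all-maximal property is what does the work: at any cell modified by more than one cycle, all of those cycles must change it in the \emph{same} direction, since a (global) source-transition in $M_k$ admits no incoming flow from any cycle and a sink-transition admits no outgoing flow — precisely the configurations the lemma's merging step eliminates. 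Hence every product $\delta^{(p)}\delta^{(q)}\ge 0$, the cross terms are non-negative, and combined with $g_j\ge 0$ we conclude $f(P)\le f(P^*)$. The point requiring the most care is confirming that, after embedding into each $M_k$, every net-nonzero cell is genuinely a source- or sink-transition to which this sign argument applies, rather than an interior cell carrying cancelling flow; establishing that is the technical heart of the argument.
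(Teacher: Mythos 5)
Your argument is correct and reaches the same conclusion as the paper's, but it packages the key step differently. The paper orders the all-maximal cycles $y'_1,\dots,y'_{\ell'}$ and uses the exact telescoping identity $f(P^*)-f(P)=\sum_k gain(y'_{k,k})$, where $gain(y'_{k,k})$ is evaluated \emph{after} the first $k-1$ exchanges have been applied; it then extracts one cycle with $gain(y'_{g,g})<0$ and proves a monotonicity lemma (Lemma~\ref{upper-bound-diversity-one-feature}) showing $gain(y'_{g,1})\le gain(y'_{g,g})$, because all-maximality forces sink-transition counts to be nondecreasing and source-transition counts nonincreasing across steps. You instead evaluate every cycle's gain $g_j$ at the initial state $P$ and expand the quadratic directly, showing $f(P^*)-f(P)=\sum_j g_j+\sum_{i,k,k'}\lambda_k\sum_{p\ne q}\delta^{(p)}_{i,k,k'}\delta^{(q)}_{i,k,k'}$ with nonnegative cross terms. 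Both arguments rest on exactly the same structural consequence of all-maximality --- at any cell all cycles push in the same direction --- but yours uses it to sign the cross terms while the paper uses it for per-vertex monotonicity of contributions; your version is arguably cleaner in that it avoids the sequential bookkeeping and the contradiction framing, while the paper's avoids writing out the quadratic expansion. Two shared caveats, neither fatal and neither addressed by the paper either: (1) an all-maximal cycle is in general a union of simple matrix cycles and may change a single cell by more than one unit, so the per-cell deltas need not lie in $\{-1,0,+1\}$ (the sign argument and the convexity still go through, but the formulas should be stated for general same-signed integer deltas); and (2) the step ``no negative simple cycle in $G'$ implies $g_j\ge 0$'' needs the observation that the closed walk in $G'$ corresponding to $y'_j$ decomposes into simple cycles whose weights, all evaluated at state $P$, sum to $g_j$.
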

\begin{proof}
Let $f(P)$ show the value of the objective function for the assignment $P$. $f(P^*)-f(P)<0$ therefore:
\scriptsize{\[{f(P^*)-f(P) = gain(y'_{1,1}) + gain(y'_{2,2})+\cdots
+gain(y'_{\ell', \ell'}) < 0}\]
}
\normalsize{}
Where $y'_k$ ($1\leq k \leq \ell'$) is the $k^{th}$ cycle in the maximal cycle decomposition, and $y'_{k,k}$ is applying the local exchange of the cycle $y'_{k}$ at step $k$. The initial step is the assignment $P$. Since $f(P^*)-f(P)<0$, there must be a maximal cycle $y'_g$ such that $gain(y'_{g,g})<0$. We wish to show $gain(y'_{g,1})<0$, which implies starting from the initial assignment $P$, a local exchange can be done with a negative gain, and $P$ is not a local optimum which is a contradiction.

Let $D(y'_{g,g}), U(y'_{g,g})$ denote respectively the change in the diversity, and the change in the utility when applying a local exchange $y'_{g}$ in step $g$.
Let $D_{f_k}(y'_{g,g})$ denote the change in the diversity w.r.t the feature $f_k$, when applying $y'_{g,g}$. Therefore:
\[gain(y'_{g,g}) = \sum_{k\in |\mathcal{F}|} D_{f_k}(y'_{g,g}) + U(y'_{g,g})\]
Lemma~\ref{upper-bound-diversity-one-feature} shows if $D_{f_k}(y'_{g,g}) < 0$, then $D_{f_k}(y'_{g,1}) < 0$.
As a result, $D(y'_{g,g}) < 0$ implies $D(y'_{g,1}) < 0$. It is easy to see that $U(y'_{g,g}) = U(y'_{g,1})$. Therefore, $gain(y'_{g,g})<0$ implies $gain(y'_{g,1}) < 0$, and the proof is complete.
\end{proof}
\begin{lemma}\label{upper-bound-diversity-one-feature} If $D_{f_k}(y'_{g,g}) < 0$, then $D_{f_k}(y'_{g,1}) < 0$.
\end{lemma}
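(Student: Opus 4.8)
The plan is to fix the feature $f_k$ and work entirely inside its embedding $M_k$, comparing the two ways of applying the fixed cycle $y'_g$: directly to the assignment $P$ (this is $y'_{g,1}$) and to the intermediate assignment obtained from $P$ after the earlier cycles $y'_1,\dots,y'_{g-1}$ have been performed (this is $y'_{g,g}$). In $M_k$ only the source-transition and sink-transition cells of $y'_g$ change their worker counts: at every intermediate cell the cycle brings in exactly as many workers as it takes out, so the count is unchanged and its contribution to the change in $D_k$ is zero. Hence I would write the value of $D_{f_k}$ for $y'_g$ applied at a state with counts $c$ as a sum of per-cell terms, where a source cell losing $\delta$ workers contributes $(c-\delta)^2-c^2=\delta^2-2\delta c$ and a sink cell gaining $\delta$ workers contributes $(c+\delta)^2-c^2=\delta^2+2\delta c$. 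Crucially, the moved amounts $\delta$ are identical whether $y'_g$ is applied at step $1$ or at step $g$, since they are fixed by the edges of $y'_g$; only the baseline counts $c$ differ.

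The heart of the argument is to control how the earlier cycles shift those baseline counts, and this is exactly where the all-maximal property (guaranteed by the preceding decomposition lemma) is used. Because $y'_g$ is all-maximal, in $M_k$ each of its source-transition cells is a source-transition with respect to \emph{all} edges of $P\oplus P^*$, so no cycle in the decomposition ever sends a worker into it; likewise each sink-transition cell of $y'_g$ has no outgoing edge in $M_k$ from any cycle. Consequently, performing $y'_1,\dots,y'_{g-1}$ can only decrease the count at a source cell of $y'_g$ and can only increase the count at a sink cell. Writing $c^{(1)}$ for the counts in $P$ and $c^{(g)}$ for the counts at step $g$, this gives $c^{(g)}_s \le c^{(1)}_s$ at every source cell $s$ and $c^{(g)}_t \ge c^{(1)}_t$ at every sink cell $t$.

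Finally I would invoke monotonicity of the per-cell terms: $\delta^2-2\delta c$ is nonincreasing in $c$ and $\delta^2+2\delta c$ is nondecreasing in $c$ (both because $\delta\ge 0$). Since at each source cell the step-$1$ count is the larger one, its step-$1$ contribution is at most its step-$g$ contribution; since at each sink cell the step-$1$ count is the smaller one, its step-$1$ contribution is again at most its step-$g$ contribution. Summing over all source and sink cells of $y'_g$ in $M_k$ yields $D_{f_k}(y'_{g,1}) \le D_{f_k}(y'_{g,g})$, so $D_{f_k}(y'_{g,g})<0$ forces $D_{f_k}(y'_{g,1})<0$, which is the claim. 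I expect the main obstacle to be the second step: rigorously deducing, from the all-maximal definition, that the earlier cycles move the boundary counts of $y'_g$ monotonically in the single direction each cell requires --- the first and third steps are then routine sign and monotonicity bookkeeping.
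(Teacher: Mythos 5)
Your proof is correct and follows essentially the same route as the paper's: partition the cells of $y'_g$ in $M_k$ into interior cells (zero contribution), source-transitions, and sink-transitions, use all-maximality to get the one-directional drift of the boundary counts between step $1$ and step $g$ (sources only lose, sinks only gain), and conclude $D_{f_k}(y'_{g,1})\le D_{f_k}(y'_{g,g})$ by monotonicity of the per-cell increments. The only cosmetic difference is that you carry a general transfer amount $\delta$ per boundary cell where the paper writes the unit-transfer case $(v\pm 1)^2 - v^2$; if anything this is slightly more careful for merged cycles whose boundary cells are touched by several edges.
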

\begin{proof}
Consider $y'_{g,g}$ embedded into $M_{k}$. There are four types of vertices in $y'_{g,g}$:
\begin{itemize}
   \item Vertices in the form of $w_{0,j}$  where $1\leq j\leq |\mathcal{V}|$. These vertices have contribution zero to both $D_{f_k}(y'_{g,g})$ and $D_{f_k}(y'_{g,1})$.
   \item Vertices that are not sink-transition or source-transition, i.e. $w_{2,2}$ in Figure~\ref{fig:matrix-exchange}, w.r.t $M_k$. It could be seen that contribution of these nodes to both $D_{f_k}(y'_{g,g})$ and $D_{f_k}(y'_{g,1})$ is zero.
   \item Sink-transitions: Consider an arbitrary sink-transition $v$ in $y'_{g,g}$. Assume the value of this node at the beginning of step $g$ is $v_g$. The contribution of $v$ to $D_{f_k}(y'_{g,g})$ is $\lambda_k\big((v_g+1)^2 - v_g^2\big)>0$. Since $v$ is a sink-transition, $v_g\geq v_1$. Therefore, $\lambda_k\big((v_g+1)^2 - v_g^2\big)\geq\lambda_k\big( (v_1+1)^2 - v_1^2\big)$.
   \item Source-transitions: Consider an arbitrary source-transition $v$ in $y'_{g,g}$. 
   The contribution of $v$ to $D_{f_k}(y'_{g,g})$ is $\lambda_k\big((v_g-1)^2 - v_g^2\big)$. Since $v$ is a source-transition $v_1\geq v_g$, and therefore $\lambda_k\big((v_g-1)^2-v_g^2\big) \geq \lambda_k\big((v_1-1)^2 - v_1^2\big)$.
\end{itemize}
At the end, contribution of all the vertices to $D_{f_k}(y'_{g,1})$ is upper bounded by their contribution to $D_{f_k}(y'_{g,g})$. Therefore if $D_{f_k}(y'_{g,g})<0$, then $D_{f_k}(y'_{g,1})<0$.
\end{proof}
\begin{theorem}
\label{thm:running-time}
The running time of the algorithm is $\mathcal{O}((\lambda_{\max}\cdot|\mathcal{F}|\cdot n^2+\lambda_0U)\cdot |\mathcal{V}|^2\cdot t^2(|\mathcal{V}|+t))$, where $U$ is the maximum cost of an initial feasible b-matching and $\lambda_{max} = \max\{\lambda_1,\cdots, \lambda_{|\mathcal{F}|}\}$.
\end{theorem}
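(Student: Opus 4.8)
The plan is to write the total running time as the product of two quantities: the number of iterations of the \textbf{while} loop in Algorithm~\ref{alg-neg-cycle-detection}, and the cost of a single iteration. Each of the two factors in the claimed bound comes from one of these quantities: $|\mathcal{V}|^2\cdot t^2(|\mathcal{V}|+t)$ is the per-iteration cost of detecting one negative cycle, and $\lambda_{\max}\cdot|\mathcal{F}|\cdot n^2+\lambda_0 U$ bounds the number of iterations.

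For the per-iteration cost, I first count the size of the auxiliary graph $G'$. Each of the $\mathcal{O}(t)$ switches contributes $2|\mathcal{V}|$ port-nodes, so $G'$ has $\mathcal{O}(t|\mathcal{V}|)$ vertices. The edges come in two kinds: the intra-switch edges (from each input port to each output port) number $\mathcal{O}(|\mathcal{V}|^2)$ per switch, for $\mathcal{O}(t|\mathcal{V}|^2)$ in total, and the inter-switch edges (one per ordered pair of teams per feature combination) number $\mathcal{O}(t^2|\mathcal{V}|)$; hence the edge count is $\mathcal{O}(t|\mathcal{V}|(|\mathcal{V}|+t))$. Running Bellman-Ford (or the Goldberg--Radzik variant, whose worst-case bound is the same) to find a negative cycle costs vertices times edges, i.e. $\mathcal{O}(t|\mathcal{V}|)\cdot\mathcal{O}(t|\mathcal{V}|(|\mathcal{V}|+t)) = \mathcal{O}(|\mathcal{V}|^2 t^2(|\mathcal{V}|+t))$, which is exactly the per-iteration factor. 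I then have to check that the remaining work of an iteration---walking along the detected cycle, moving one worker per edge, and updating the affected edge weights---is dominated by this term: a cycle has at most $\mathcal{O}(t|\mathcal{V}|)$ edges and each worker move touches only the $\mathcal{O}(|\mathcal{V}|+t)$ edges incident to the two changed ports, so the update cost is subsumed. The $\mathcal{O}(|\mathcal{V}|+t)$ cost of constructing the initial feasible matching $Q$ is likewise negligible.

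For the number of iterations, the key observation is integrality: all of $\lambda_0$, the $\lambda_k$, the costs $u_{i,j}$, and the counts $c_{i,k,k'}$ are nonnegative integers, so the objective $f = \sum_k\lambda_k D_k + \lambda_0 TU$ is integer-valued. Because each iteration cancels a \emph{negative} cycle, it strictly decreases $f$, and by integrality the decrease is at least $1$. Since $f\geq 0$ always, the number of iterations is at most $f(Q)$, the objective value of the initial feasible matching. I then bound $f(Q)$: for any fixed feature $f_k$, the counts $c_{i,k,k'}$ sum to at most $n$, and the sum of squares $D_k=\sum_{i,k'}c_{i,k,k'}^2$ of nonnegative integers with fixed sum at most $n$ is maximized by concentrating all mass in one cell, giving $D_k\le n^2$; summing over the $|\mathcal{F}|$ features yields $\sum_k\lambda_k D_k \le \lambda_{\max}|\mathcal{F}|n^2$. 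The utility term satisfies $\lambda_0 TU \le \lambda_0 U$ by the definition of $U$ as the maximum cost of an initial feasible $b$-matching. Hence $f(Q)\le \lambda_{\max}|\mathcal{F}|n^2+\lambda_0 U$.

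Multiplying the two bounds gives the claimed running time. I expect the main obstacle to be the integrality/strict-decrease argument that caps the iteration count: it is what turns the mere correctness guarantee (negative cycles keep improving the solution) into a finite, and in fact pseudo-polynomial, bound, and it is the only place where the assumption that all parameters are positive integers is essential. A secondary care point is the careful edge count of $G'$ needed to land exactly on the $|\mathcal{V}|^2\cdot t^2(|\mathcal{V}|+t)$ factor rather than a looser bound.
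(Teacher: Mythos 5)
Your proof is correct and follows essentially the same route as the paper: it splits the bound into an iteration count (via integrality of the objective and the fact that each cancelled negative cycle decreases it by at least $1$) and a per-iteration cost (via the $\mathcal{O}(|V|\cdot|E|)$ negative-cycle detection on an auxiliary graph with $\mathcal{O}(t|\mathcal{V}|)$ nodes and $\mathcal{O}(t|\mathcal{V}|(|\mathcal{V}|+t))$ edges), exactly as in the paper's Lemmas~\ref{lem:num-iter} and~\ref{lem:complexity-bellman-ford}. Your additional checks (that the cycle-walking and weight-update work is dominated, and the explicit justification that $D_k\le n^2$) only make the argument slightly more complete than the paper's version.
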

In order to prove this theorem, first we show the following lemmas hold.
\begin{lemma}
\label{lem:num-iter}
The number of iterations of our algorithm is at most $(\lambda_{max}\cdot|\mathcal{F}|\cdot n^2 + \lambda_0 U)$.
\end{lemma}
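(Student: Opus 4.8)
The plan is to treat the objective value $f=\sum_{k=1}^{|\mathcal{F}|}\lambda_k D_k+\lambda_0\cdot TU$ as a monotonically decreasing integer-valued potential and to bound the total number of times it can drop. First I would argue that every iteration of the while loop strictly decreases $f$ by at least $1$. Each iteration cancels a negative cycle $C$ of $G'$, and by the way the edge weights of $G'$ were defined (so that the gain of a cycle equals the marginal change in the objective), the gain of $C$ is exactly $\Delta f$ for the corresponding local exchange. Since the demands $d_i$, the costs $u_{i,j}$, and the constants $\lambda_k,\lambda_0$ are all integers, every count $c_{i,k,k'}$ is an integer and hence $f$ is always integral; a strictly negative integer gain is therefore at most $-1$. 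Thus each accepted local exchange reduces $f$ by an integer amount of at least $1$.

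Next I would bound the range of $f$. For the diversity term, fix a feature $f_k$: within each team $T_i$ the counts satisfy $\sum_{k'}c_{i,k,k'}=d_i$, so $\sum_{k'}c_{i,k,k'}^2\le d_i^2$, and summing over teams gives $D_k=\sum_i\sum_{k'}c_{i,k,k'}^2\le\sum_i d_i^2\le\big(\sum_i d_i\big)^2\le n^2$. Hence $\sum_k\lambda_k D_k\le\lambda_{\max}\cdot|\mathcal{F}|\cdot n^2$. For the utility term, because the algorithm never increases $f$, the quantity $\lambda_0\cdot TU$ along the run is at most its value on the initial feasible matching $Q$, which is $\lambda_0 U$ by the definition of $U$ as the maximum cost of an initial feasible $b$-matching. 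Therefore the starting objective satisfies $f(Q)\le\lambda_{\max}\cdot|\mathcal{F}|\cdot n^2+\lambda_0 U$.

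Finally, since each $u_{i,j}\ge 0$ and each $D_k$ is a sum of squares, we have $f\ge 0$ at all times. Combining the three observations, the number of iterations is bounded by the total achievable decrease of $f$, namely $f(Q)-\min f\le f(Q)-0\le\lambda_{\max}\cdot|\mathcal{F}|\cdot n^2+\lambda_0 U$, which is exactly the claimed bound.

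The two inequalities bounding $D_k$ and the nonnegativity of $f$ are routine; the single point that needs care is the integrality/strict-decrease step, since the whole argument hinges on each iteration buying a full unit of progress. This is where the standing assumption that all weights and constants are integers is essential, and I would be careful to invoke the already-established identity that a cycle's gain equals $\Delta f$ so that an integer edge-weight sum translates directly into an integer decrease of the objective.
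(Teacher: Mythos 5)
Your proposal is correct and follows essentially the same argument as the paper: bound the initial objective by $\lambda_{\max}\cdot|\mathcal{F}|\cdot n^2+\lambda_0 U$, note that integrality of all weights forces each cancelled negative cycle to decrease the objective by at least $1$, and use nonnegativity of the objective to bound the number of iterations. You simply spell out the intermediate inequalities (e.g., $D_k\le n^2$) that the paper states without justification.
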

\begin{proof}
The initial state of the algorithm is a feasible b-matching with cost at most $U$. Diversity of any matching is at most $\lambda_{max}\cdot|\mathcal{F}|\cdot n^2$. At each iteration, we find a negative weight cycle and since all the weights are integers, its weight can be at most $-1$. Therefore, the objective function decreases by at least $1$ at each step, and since the value of the objective function is always positive, the number of iterations is at most $(\lambda_{max}\cdot|\mathcal{F}|\cdot n^2 + \lambda_0 U)$.
\end{proof}
\begin{lemma}
\label{lem:complexity-bellman-ford}
The complexity of each iteration of the algorithm is $\mathcal{O}(|\mathcal{V}|^2\cdot t^2(|\mathcal{V}|+t))$.
\end{lemma}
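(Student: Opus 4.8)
The plan is to bound the work of a single \textbf{while}-loop iteration of Algorithm~\ref{alg-neg-cycle-detection} by first measuring the size of the auxiliary graph $G'$ and then invoking the standard running time of negative-cycle detection. First I would count the vertices of $G'$: each of the $t+1$ switches (one per team, plus the dummy team $T_0$) contributes $|\mathcal{V}|$ input ports and $|\mathcal{V}|$ output ports, so $|V(G')| = 2(t+1)|\mathcal{V}| = \mathcal{O}(t|\mathcal{V}|)$.

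Next I would count the edges in two groups. The intra-switch edges form, inside each switch, a complete bipartite graph from the $|\mathcal{V}|$ input ports to the $|\mathcal{V}|$ output ports, giving $|\mathcal{V}|^2$ edges per switch and $\mathcal{O}(t|\mathcal{V}|^2)$ intra-switch edges in total. The inter-switch edges connect, for every ordered pair of distinct teams $(T_{i_1},T_{i_2})$ and every feature combination $v_j$, the port $O_j^{i_1}$ to $I_j^{i_2}$; there are $\mathcal{O}(t^2)$ ordered pairs and $|\mathcal{V}|$ combinations, so $\mathcal{O}(t^2|\mathcal{V}|)$ inter-switch edges. Hence $|E(G')| = \mathcal{O}(t|\mathcal{V}|^2 + t^2|\mathcal{V}|) = \mathcal{O}(t|\mathcal{V}|(|\mathcal{V}|+t))$.

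The dominant cost of an iteration is the negative-cycle detection run on $G'$, which with a Bellman--Ford-style algorithm takes $\mathcal{O}(|V(G')|\cdot|E(G')|)$ time. Substituting the two counts gives
\[
\mathcal{O}\bigl(t|\mathcal{V}|\cdot t|\mathcal{V}|(|\mathcal{V}|+t)\bigr)=\mathcal{O}\bigl(|\mathcal{V}|^2\cdot t^2(|\mathcal{V}|+t)\bigr),
\]
which is exactly the claimed bound. It remains to check that the two bookkeeping steps in the loop body do not dominate: performing the local exchange along the detected cycle $C$ touches $\mathcal{O}(|C|)=\mathcal{O}(|V(G')|)$ ports, and re-computing the weights of the edges incident to the altered ports affects only $\mathcal{O}(t|\mathcal{V}|)$ edges, both of which are absorbed into the $\mathcal{O}(|V(G')|\cdot|E(G')|)$ term.

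The main obstacle I expect is the edge count rather than the cycle-detection bound itself: one must keep the two edge families separate, because a naive estimate that treats every pair of ports as adjacent would overcount and break the claimed asymptotics. Care is also needed to confirm that the weight-update step genuinely stays within budget---specifically that each exchange changes only the counts $c_{i,k,k'}$ of the teams lying on the cycle, so that only edges incident to their ports need re-weighting, which keeps the update cost strictly below the detection cost.
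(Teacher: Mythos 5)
Your proof is correct and follows essentially the same route as the paper's: count the $2(t{+}1)|\mathcal{V}|$ ports, bound the number of edges by $\mathcal{O}(|\mathcal{V}|\cdot t(|\mathcal{V}|+t))$, and apply the $\mathcal{O}(|V|\cdot|E|)$ bound for Bellman--Ford-style negative-cycle detection. The only cosmetic difference is that you count edges by splitting them into intra- and inter-switch families, whereas the paper bounds the degree of each port by $|\mathcal{V}|+t$; your additional check that the local-exchange and weight-update bookkeeping is dominated by the detection step is a harmless (and welcome) extra.
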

\begin{proof}
At each iteration, we use a negative cycle detection algorithm with running time $\mathcal{O}(|V|\cdot |E|)$ (where $|V|$ is the number of nodes in the auxiliary graph and $|E|$ is the number of edges). The number of nodes in the graph is $2|\mathcal{V}|\cdot (t+1)$, since there are $t+1$ switches in the graph and each switch has exactly $2|\mathcal{V}|$ ports and each port is a node in the graph. The number of edges incident on each port is $|\mathcal{V}|+t$. Therefore, the total number of edges is $\mathcal{O}(|\mathcal{V}|\cdot t(|\mathcal{V}|+t))$. Hence, the complexity of each iteration is $\mathcal{O}(|\mathcal{V}|^2\cdot t^2(|\mathcal{V}|+t))$.
\end{proof}
Combining Lemma~\ref{lem:num-iter} with Lemma~\ref{lem:complexity-bellman-ford}, and considering $\mathcal{O}(|\mathcal{V}|+t)$ time complexity for finding an initial feasible solution, yields Theorem~\ref{thm:running-time}.
\section{Diverse Weighted Bipartite $b$-Matching}\label{sec:general-weights}
In this section, we extend our algorithm to solve the case where the cost of assigning workers from the \emph{same} country to a team can be different. First, in each switch we put input and output ports for each worker. Inside each switch, there is a complete bipartite graph from input ports to the output ports. Consider an edge between an input port to an output port corresponding to workers $x_i$ and $x_j$. Per each feature $f_k$ where $x_i, x_j$ have the same values for $f_k$, $-2\lambda_k$ is added to the weight of the edge between $x_i, x_j$.

 Consider an edge from output port $x_{k}^{i_1}$ of switch $T_{i_1}$ to input port $x_{k}^{i_2}$ of switch $T_{i_2}$, where $x_{k}\in V_j$. The weight of this edge is equal to the change in the objective function by moving one worker from $V_j$ out of $T_{i_1}$, and adding that worker to $T_{i_2}$.
The proof of the following theorem is similar to Thm.~\ref{thm:running-time}.
\begin{theorem}
The running time of the algorithm for general weights is $\mathcal{O}((\lambda_{max}\cdot |\mathcal{F}|\cdot n^2+\lambda_0U)\cdot n^2\cdot t^2(n+t))$, where $U$ is the maximum cost of any feasible $b$-matching.
\end{theorem}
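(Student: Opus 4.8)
The plan is to mirror the two-lemma structure used to establish Thm.~\ref{thm:running-time}, since the only thing that changes in the general-weights construction is the size of the auxiliary graph $G'$: each switch now carries one input port and one output port \emph{per worker} rather than one pair of ports per feature-combination column. Concretely, I would first re-prove the analog of Lemma~\ref{lem:num-iter}, bounding the number of iterations by $\lambda_{max}\cdot|\mathcal{F}|\cdot n^2 + \lambda_0 U$. This argument is insensitive to the weight structure: the initial feasible $b$-matching has cost at most $U$, and the diversity of any matching is at most $\lambda_{max}\cdot|\mathcal{F}|\cdot n^2$ because $D_k=\sum_i\sum_{k'}c_{i,k,k'}^2\le\big(\sum_i\sum_{k'}c_{i,k,k'}\big)^2=n^2$. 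Since all edge weights are integers, every detected negative cycle decreases the objective by at least $1$, and the objective stays nonnegative, so the initial objective value $\lambda_{max}\cdot|\mathcal{F}|\cdot n^2+\lambda_0 U$ upper-bounds the iteration count verbatim.

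The second step is the analog of Lemma~\ref{lem:complexity-bellman-ford}, where I recount the nodes and edges of the new graph. There are $t+1$ switches (including the dummy team $T_0$), and each switch now has exactly $2n$ ports, so $|V| = 2n(t+1)$. Within a switch the input-to-output connection is a complete bipartite graph on $n+n$ ports, contributing $n$ edges per port, and across switches each output port corresponding to a worker $x_k$ connects to the input port of that same worker $x_k$ in each of the $t$ other switches, contributing $t$ edges per port; hence every port is incident to $n+t$ edges and $|E|=\mathcal{O}(n\cdot t(n+t))$. Plugging into the $\mathcal{O}(|V|\cdot|E|)$ cost of the Bellman--Ford-based negative-cycle detection gives a per-iteration cost of $\mathcal{O}(n^2\cdot t^2(n+t))$. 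Multiplying the two bounds (and noting that the additive cost of computing an initial feasible solution is dominated by the main term) yields the claimed $\mathcal{O}\big((\lambda_{max}\cdot|\mathcal{F}|\cdot n^2+\lambda_0 U)\cdot n^2\cdot t^2(n+t)\big)$ bound.

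The substantive part of the argument is really the node/edge recount of the second paragraph; everything else is a term-by-term substitution of $n$ for $|\mathcal{V}|$ in the proof of Thm.~\ref{thm:running-time}. The one place I would be careful is justifying that each iteration still corresponds to a genuine improving local exchange in this finer graph, i.e., that the correctness guarantee of Thm.~\ref{proof-correctness} transfers to the per-worker construction; but since the edge weights are defined so that each cycle encodes exactly the change in $\lambda\cdot D+\lambda_0\cdot TU$ of the corresponding exchange, the all-maximal-cycle decomposition and the monotonicity argument of Lemma~\ref{upper-bound-diversity-one-feature} go through unchanged, and in any case correctness is not needed for the running-time bound itself. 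Thus the main (and only mildly nontrivial) obstacle is getting the edge count $n+t$ per port right, which hinges on observing that cross-switch edges only ever join \emph{identical} workers across distinct teams.
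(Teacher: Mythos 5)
Your proposal is correct and follows exactly the route the paper intends: the paper itself gives no written-out proof here, simply remarking that the argument is ``similar to Thm.~\ref{thm:running-time},'' and your expansion---keeping the iteration bound of Lemma~\ref{lem:num-iter} verbatim and redoing the node/edge count of Lemma~\ref{lem:complexity-bellman-ford} with $2n$ ports per switch in place of $2|\mathcal{V}|$---is precisely that substitution. The port-incidence count of $n+t$ and the resulting $\mathcal{O}(n^2\cdot t^2(n+t))$ per-iteration cost are right, so there is nothing to add.
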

\section{Experimental Validation \& Discussion}\label{sec:experiments}
To demonstrate the efficacy of the proposed method, we apply it to a dataset of reviewer paper matching. First, we find the optimal solution for multi-feature reviewer paper matching and compare it to the single feature diverse matching method. We also provide the MIQP formulation of the same problem based on literature and show how our algorithm is faster to the Gurobi based MIQP solver.

For the reviewer assignment problem, where each reviewer has multiple features, we want to match each paper with reviewers who are not only from different expertise areas (clusters), but also belong to different genders. We use the multi-aspect review assignment evaluation dataset~\cite{karimzadehgan2009constrained}, a benchmark dataset from UIUC. It contains $73$ papers accepted by SIGIR $2007$, and $189$ prospective reviewers who had published in the main information retrieval conferences. The dataset provides $25$ major topics and for each paper in the set, an expert provided $25$-dimensional label on that paper based on a set of defined topics. Similarly for the $189$ reviewers, a $25$-dimensional expertise representation is provided. 

To compare our method (Algorithm~\ref{alg-neg-cycle-detection}) with a baseline, we formulate a multi-feature MIQP variant of our problem, which is an extension of the single-feature formulation provided in~\cite{Ahmed:2017:DWB:3171642.3171649} and is given by:
\begin{align*}
&\min\lambda_0\sum_{i=1}^t \sum_{j=1}^{|\mathcal{F}_1|} u_{i,j}\cdot c_{i,1,j}+\sum_{k=1}^{|\mathcal{F}|} \lambda_k \sum_{i=1}^t\sum_{k'=1}^{|\mathcal{F}_k|}c_{i,k,k'}^2\\
& \sum_{k=1}^{|\mathcal{F}|}\sum_{k'=1}^{|\mathcal{F}_k|} c_{i,k,k'} = d_i, \forall 1\leq i\leq t\\
&\sum_{i=0}^t c_{i,k,k'} = |\mathcal{F}_{k,k'}|, 1\leq k\leq |\mathcal{F}|,1\leq k' \leq |\mathcal{F}_{k}|
\end{align*}
To set up the graph for our method, we first cluster the reviewers into $5$ clusters based on their topic vectors using spectral clustering. To calculate the relevance of each cluster for any paper, we take the average cosine similarity of label vectors of reviewers in that cluster and the paper. We set the constraints such that each paper matches with exactly $4$ reviewers, and no reviewer is allocated more than $1$ paper. To increase dataset size, we double the number of reviewers by creating a copy of each reviewer. As the original dataset lacks gender information, we added a new feature to each reviewer in this dataset by randomly adding one of two gender labels (Male or Female) to each reviewer. We set $\lambda_0 = \lambda_1 = \lambda_2 = 1 $ for our experiments.


 



We run the negative cycle detection algorithm, and the MIQP solver using Gurobi to find the optimum solution. 
On converging to the optimal solution, we find that all $73$ papers receive two male reviewers and two female reviewers, which shows that the method was capable of balancing gender diversity. All papers receive reviewers from four different clusters too. If we only optimize for cluster diversity, it is possible that the gender ratio for individual paper gets skewed. When we run the same model with $\lambda_{g}=0$ (no weight to gender diversity), we find that out of $73$ papers, $12$ papers receive all four reviewers of the same gender and $41$ papers receive three reviewers of the same gender. Hence, only $27.3\%$ teams of reviewers are gender balanced. However, one should note that when we do not keep gender as an objective, the resultant allocation is random and different skewness can be observed in different runs based on the initial solution. 

Finally, we compare the timing performance of our algorithm with MIQP by changing the number of papers that need to be reviewed on a Dell XPS 13 laptop with i7 processor. For MIQP, we set a maximum run time of four hours (14400 seconds) for Gurobi solver, at which we report the current best MIQP solution.
Table~\ref{tab:res} shows that for all cases with the number of papers greater than $13$, MIQP does not converge within four hours, while our method finds the optimum solution in lesser time. Interestingly, MIQP current solutions are found to be the same as the optimum solution found by our method, which shows that for this application, MIQP was able to search the solution but it was not able to prove that the solution is optimum. In contrast, our method finds the solution faster as well as guarantees that it is optimum. 

\begin{table}[!htbp]
\scriptsize
\centering
\begin{tabular}{rrrr}
\toprule
\# Papers & \# Reviewers & MIQP Time (s) & Our Method Time (s)\\
\midrule
03 & 378 & 24.68 & 0.18\\
13 & 378 &  3979.90 & 14.84\\
23 & 378 &  14400.00 & 122.96\\
33 & 378 &  14400.00 & 400.56\\
43 & 378 &  14400.00 & 825.95\\
53 & 378 &  14400.00 & 2837.15\\
63 & 378 &  14400.00 & 5453.58\\
73 & 378 &  14400.00 & 11040.55\\
\bottomrule\\
\end{tabular}
\caption{Comparison of MIQP and our method for UIUC reviewer dataset with each paper needing 4 reviewers.}
\label{tab:res}
\end{table}

\section{Conclusion \& Future Research}\label{sec:conclusions}

In this paper, we proposed the first pseudo-polynomial time algorithms for multi-feature diverse weighted bipartite $b$-matching---a problem that we also showed is NP-hard. We propose an algorithm that not only guarantees an optimal solution but also converges faster than a proposed approach using a black-box industrial MIQP solver. We demonstrated our results on a dataset for paper reviewer matching. Future work could explore the extension of this method to online diverse matching~\cite{Dickerson19:Balancing}, where vertices arrive sequentially and must match immediately; this has direct application in advertising, where one could balance notions of reach, frequency, and immediate monetary return.  Exploring connections to fairness in machine learning~\cite{Grgic-Hlava18:Beyond} and hiring~\cite{Schumann19:Diverse} by way of diversity are also of immediate interest.
{\footnotesize
\bibliographystyle{named}
\bibliography{ijcai18,new_refs}%
}
\end{document}